\def\BibTeX{{\rm B\kern-.05em{\sc i\kern-.025em b}\kern-.08em
    T\kern-.1667em\lower.7ex\hbox{E}\kern-.125emX}}
\pgfplotsset{compat=newest}
\newtheorem{theorem}{Theorem}
\newtheorem{lemma}[theorem]{Lemma}
\theoremstyle{definition}
\newtheorem{example}{Example}
\DeclareFontFamily{U}{futm}{}
\DeclareFontShape{U}{futm}{m}{n}{
  <-> s * [.92] fourier-bb
  }{}
\DeclareSymbolFont{Ufutm}{U}{futm}{m}{n}
\DeclareSymbolFontAlphabet{\mathbb}{Ufutm}
\begin{document}

\title{On the Semi-supervised Expectation Maximization}

\author{\IEEEauthorblockN{Erixhen Sula}
\IEEEauthorblockA{\textit{The Department of Electrical Engineering} \\
\textit{and Computer Science} \\
\textit{MIT}\\
Massachusetts, USA \\
esula@mit.edu}
\and
\IEEEauthorblockN{Lizhong Zheng}
\IEEEauthorblockA{\textit{The Department of Electrical Engineering} \\
\textit{and Computer Science} \\
\textit{MIT}\\
Massachusetts, USA \\
lizhong@mit.edu}
}

\maketitle

\begin{abstract}
The Expectation Maximization (EM) algorithm is widely used as an iterative modification to maximum likelihood estimation when the data is incomplete. We focus on a semi-supervised EM to learn the model from labeled samples besides the unlabeled ones. By contrast to existing work in semi-supervised learning that focuses mainly on performance, we focus on the impact of the labeled samples on convergence. Our study is limited to population EM or EM with unlimited data. The analysis shows the impact of the labeled samples on the convergence rate for the exponential family mixture model and Gaussian mixture models. To be precise, we prove local convergence for exponential family mixture models where we initialize EM within the neighborhood of the global optimum. Besides, we prove global convergence for Gaussian mixture models by computing the convergence rate.


\end{abstract}
\begin{IEEEkeywords}
Expectation Maximization, semi-supervised learning, population EM, maximum likelihood estimate.
\end{IEEEkeywords}

\IEEEpeerreviewmaketitle
\section{Introduction}

The data available to us is mainly incomplete, with missing values or corrupted. Maximum likelihood estimation (MLE) is an essential tool for parametric model estimation, but it is intractable for most incomplete data. This challenge is addressed in part by the Expectation Maximization algorithm that has emerged as an iterative modification to MLE. One major drawback of EM is to output the local optima of the likelihood function. In this regard, there is a rich literature on the convergence of EM to the global optimum of the likelihood function under some mild assumptions \cite{Wu83,Hero-Fessler95,Meng94,Meng-Rubin94,Redner-Walker84,Dasgupta-Schulman07,Xu-Jordan96,chaudhuri09,Xu16,Hardt-Price15,Pearson1894,Wainwright17}. There is little information on how labeled samples affect EM, so primarily we focus on semi-supervised EM. Semi-supervised learning combines a few labeled data with lots of unlabeled data. Labeled data have a cost, whereas unlabeled data are relatively cheap. Depending on your budget, the cost of those data will directly impact the amount of data you want to use. Semi-supervised settings can prove to be essential in such situations. In this paper, we analyze the impact of the labeled data on the convergence rate of EM for different mixture models. To the best of our knowledge, no prior work has analyzed the convergence rate of the EM algorithm for semi-supervised learning. An overview of the contribution is provided below: 



\begin{itemize}
\item The analysis shows the impact of the labeled samples on the convergence rate for the exponential family mixture model. As the exponential family contains non-specified functions, we limit our analysis by initializing the population EM within the neighborhood of the true parameters.

\item For the Gaussian mixture model, we compute the convergence rate for the population EM with labeled samples. The convergence rate is given in terms of the initialized parameters. We prove that for symmetric GMM with two components, the convergence rate does not depend on EM initialization. 
\end{itemize}

The EM algorithm has a long history, yet we will list a summary of the earlier works regarding the convergence rate. Wu \cite{Wu83} is one of the first to establish global convergence of the EM algorithm for unimodal likelihood functions under regularity conditions. However, the likelihood function is frequently multi-modal, and the EM algorithm guarantees only convergence to the local optimum \cite{Hero-Fessler95,Meng94,Meng-Rubin94}. Redner and Walker \cite{Redner-Walker84} showed that for exponential family mixture models, the population EM initialized within the basin of the true parameters of the model converges to the global optimum. For Gaussian mixture models (GMM) in high dimensions, Dasgupta and Schulman \cite{Dasgupta-Schulman07} showed fast global convergence when the clusters of the data generated by GMM are well-separated. Redner and Walker \cite{Redner-Walker84} observed that when the data generated by GMM are not well-separated, EM heuristically converges slowly. Later, Xu and Jordan \cite{Xu-Jordan96} showed theoretically slow convergence of EM for barely separated GMM. For the mixture model of two symmetric Gaussians, Chaudhuri \emph{et al} \cite{chaudhuri09} showed that a hard decision variant of EM known as k-means always converges without any separation assumption. Later, Xu \emph{et al} \cite{Xu16} showed the global convergence of EM for the mixture model of two symmetric Gaussians without any separation assumption. Specifically, for the symmetric mixture of two Gaussian with known variance and equal weights EM algorithm converges to the true mean of the model. However, for two Gaussian mixtures with known variance and unequal weights, the EM algorithm converges to three stationary solutions where only two are global maxima. For finite samples, Hardt and Price \cite{Hardt-Price15} inspected the limits of global convergence up to a constant additive factor for the mixture of two Gaussians by applying the method of moments inspired by Pearson \cite{Pearson1894}. Balakrishnan \emph{et al} \cite{Wainwright17} focused on the required conditions for global convergence of the EM algorithm with associated convergence rates. The assumptions are verified for the symmetric mixture of two Gaussians with a double exponential convergence rate if we initialize the EM algorithm within the basin of global convergence.

\section{Expectation Maximization and Maximum Likelihood}
Let $y_i$ $i=1,\dots,n$ be i.i.d. samples from $p_Y$ and let $(x_j,y_j)$, $j=1,\dots,m$ be i.i.d. samples from $p_{XY}$.
\subsection{Maximum Likelihood Estimation}
The maximum likelihood estimation using all the samples is identical to maximizing log-likelihood
\begin{align}
\arg\max_{\theta} \sum_{j=1}^m \log p_{XY}(x_j,y_j;\theta) + \sum_{i=1}^n \log p_Y(y_i;\theta).
\end{align} 
\subsection{Expectation Maximization}
EM is an iterative algorithm as an alternative to maximum likelihood estimation, by ascending the likelihood function after each iteration. EM at each iteration $t$, performs the E-step
\begin{align} \label{eqn:Qnm}
Q_{n,m}(\theta;\theta^t) &:= \sum_{j=1}^m \log p_{XY}(x_j,y_j;\theta) \nonumber \\
& \quad \quad +\sum_{i=1}^n \sum_{k=1}^K p_{X|Y}(k|y_i;\theta^t) \log p_{XY}(k,y_i;\theta)
\end{align}
and M-step as 
\begin{align} \label{eqn:Mnm}
M_{n,m}(\theta^t)=\arg\max_{\theta}  Q_{n,m}(\theta;\theta^t).
\end{align}
\section{Model}
\subsection{Exponential family mixture model}
The samples $(x_1,y_1),\dots,(x_m,y_m)$ are generated in an i.i.d. fashion from the exponential family distribution
\begin{align} \label{eqn:expfamilyjoint}
p_{XY}(k,y;\theta)&=p_X(k) p_{Y|X}(y|k; \theta) \\
&= \pi_k e^{\theta_k t(y) +\beta(y) -\alpha(\theta_k)}.
\end{align}
The sample $y_1,\dots,y_n$ are generated in an i.i.d fashion from the induced distribution
\begin{align} \label{eqn:expfamilymarg}
p_Y(y;\theta)=\sum_{k=1}^K \pi_k e^{\theta_k t(y) +\beta(y) -\alpha(\theta_k)}.
\end{align}
By plugging (\ref{eqn:expfamilymarg}) and (\ref{eqn:expfamilyjoint}) into (\ref{eqn:Qnm}) we compute $Q_{n,m}(\theta;\theta^t)$ for the exponential family mixture model as follows
\begin{align} \label{eqn:Qnm_comp_expfam}
&Q_{n,m}(\theta;\theta^t) = \nonumber \\
&\frac{1}{n+m} \left( \sum_{j=1}^m \sum_{k=1}^K \mathbb{1}_{\left\{ x_j=k \right\}} \left( \theta_k t(y_j) + \beta(y_j) -\alpha(\theta_k) \right) \right. \nonumber \\
& \quad \quad  \left. +\sum_{i=1}^n \sum_{k=1}^K q(y_i;\theta_k^t) \left( \theta_k t(y_j) + \beta(y_j) -\alpha(\theta_k) \right) \right)
\end{align}
where,
\begin{align}
q(y_i;\theta_k^t) =\frac{ e^{\theta_k^t t(y_i) + \beta(y_i) -\alpha(\theta_k^t)}}{\sum_{k^{\prime}=1}^K e^{\theta_{k^{\prime}}^t t(y_i) + \beta(y_i) -\alpha(\theta_{k^{\prime}}^t)}}.
\end{align}
Plugging the computed $Q_{n,m}(\theta;\theta^t)$ of (\ref{eqn:Qnm_comp_expfam}) into (\ref{eqn:Mnm}) we compute $M_{n,m}(\theta_k^t)$ as follows 
\begin{align} \label{eqn:Exp_fam_update}
M_{n,m}(\theta_k^t)= \beta^{-1} \hspace{-0.5em} \left( \frac{\sum_{j=1}^m \mathbb{1}_{\left\{ x_j=k \right\}}t(y_j) + \sum_{i=1}^n q(y_i;\theta_k^t) t(y_i)}{\sum_{j=1}^m \mathbb{1}_{\left\{ x_j=k \right\}} + \sum_{i=1}^n q(y_i;\theta_k^t)} \right)
\end{align}
where $\beta(x) =\frac{\partial \alpha(x)}{ \partial x}$. 

\subsection{Gaussian mixture model}
The samples $(x_1,y_1),\dots,(x_m,y_m)$ are generated in an i.i.d fashion from the Gaussian mixture distribution
\begin{align} \label{eqn:Gaussfamilyjoint}
p_{XY}(k,y;\theta)&= p_X(k) p_{Y|X}(y|k;\theta)\\
&=\pi_k \frac{1}{\sqrt{2\pi}} e^{-\frac{(y-\theta_k)^2}{2}}.
\end{align}
The sample $y_1,\dots,y_n$ are generated in an i.i.d fashion from the induced distribution 
\begin{align} \label{eqn:Gaussfamilymarg}
p_Y(y;\theta)=\sum_{k=1}^K \pi_k \frac{1}{\sqrt{2 \pi}}e^{-\frac{(y-\theta_k)^2}{2}}.
\end{align}
By plugging (\ref{eqn:Gaussfamilymarg}) and (\ref{eqn:Gaussfamilyjoint}) into (\ref{eqn:Qnm}) we compute $Q_{n,m}(\theta;\theta^t)$ for the Gaussian mixture model as follows
\begin{align} \label{eqn:Qnm_comp_Gaussfam}
&Q_{n,m}(\theta;\theta^t) = \nonumber \\
&-\frac{1}{n+m} \left( \sum_{j=1}^m \sum_{k=1}^K \mathbb{1}_{\left\{ x_j=k \right\}} \left( \frac{(y_j-\theta_k)^2}{2} +\log(\frac{1}{\pi_k} \sqrt{2\pi}) \right) \right. \nonumber \\
& \quad \quad  \left. +\sum_{i=1}^n \sum_{k=1}^K q(y_i;\theta_k^t) \left( \frac{(y_i-\theta_k)^2}{2} +\log(\frac{1}{\pi_k} \sqrt{2\pi}) \right) \right)
\end{align}
where,
\begin{align}
q(y_i;\theta_k^t) =\frac{e^{-\frac{(y_i-\theta_k^t)^2}{2}}}{\sum_{k^{\prime}=1}^K e^{-\frac{(y_i-\theta_{k^{\prime}}^t)^2}{2}}}.
\end{align}
Plugging the computed $Q_{n,m}(\theta;\theta^t)$ of (\ref{eqn:Qnm_comp_Gaussfam}) into (\ref{eqn:Mnm}) we compute $M_{n,m}(\theta_k^t)$ as follows 
\begin{align}  \label{eqn:Mnm_comp_Gaussfam}
M_{n,m}(\theta_k^t)= \frac{\sum_{j=1}^m \mathbb{1}_{\left\{ x_j=k \right\}}y_j + \sum_{i=1}^n q(y_i;\theta_k^t) y_i }{\sum_{j=1}^m \mathbb{1}_{\left\{ x_j=k \right\}} + \sum_{i=1}^n q(y_i;\theta_k^t)}.
\end{align}
\subsection{Symmetric mixture of two Gaussians} \label{sec:SymtwoGauss}
The unlabeled samples are i.i.d. generated from (\ref{eqn:Gaussfamilymarg}) and labeled samples are i.i.d. generated from (\ref{eqn:Gaussfamilyjoint}) for $K=2$ and $\theta_1=-\theta$ and $\theta_2=\theta$. We compute $Q_{n,m}(\theta;\theta^t)$ as in (\ref{eqn:Qnm_comp_Gaussfam}) for $K=2$, $\theta_1=-\theta$ and $\theta_2=\theta$, however $M_{n,m}(\theta_k^t)$ has a slightly different form compared to (\ref{eqn:Mnm_comp_Gaussfam}) because the means of the Gaussian mixture models with two components depend on each other, thus
\begin{align} \label{eqn:update_Mstep_two_Gaussian}
M_{n,m}(\theta^t)=\frac{\sum_{j=1}^m (1-2 \mathbb{1}_{\left\{ x_j=1 \right\}}) y_j + \sum_{i=1}^n (1-2q(y_i;\theta^t)) y_i}{m+n}
\end{align}
where
\begin{align} \label{eqn:q_fun_GMM2sym}
q(y_i;\theta^t) =\frac{e^{-\frac{(y_i+\theta^t)^2}{2}}}{e^{-\frac{(y_i+\theta^t)^2}{2}}+ e^{-\frac{(y_i-\theta^t)^2}{2}}}.
\end{align}
\section{Main Results}
Let us assume that the true parameter of the model is $\theta_*$. The global convergence of EM to the true parameter $\theta_*$ occurs if, 
\begin{align}
 |M_{n,m}(\theta_k) - \theta_{k_*}| \leq r | \theta_k-\theta_{k_*}|,
\end{align}
where $r <1$, is a contraction coefficient that determines the convergence rate. We will split the analysis as follows
\begin{align}
 |M_{n,m}(\theta_k) - \theta_{k_*}| &\leq \beta |M_{n}(\theta_k)-\theta_{k_*}| \label{eqn:impactlabel} \\
 & \leq \beta \kappa | \theta_k-\theta_{k_*}|, \label{eqn:unimpactlabel}
\end{align}
where $\beta <1$, is a contraction coefficient that accounts for the impact of the labeled sampled and $\kappa$ accounts for the impact of unlabelled samples.
The overall convergence rate $r$ bears the multiplicative property i.e. $r=\kappa \beta$. We primarily focus on $\beta$, that is the impact of the labeled samples on the convergence rate. We will assume that $\gamma=\frac{m}{m+n}$ where $ 0 \leq \gamma \leq 1$ is a constant. For the population based EM, as $n,m \to \infty$ implies that $M_{n,m}(\theta) \to M_{\gamma}(\theta)$, $M_{n}(\theta) \to M_0(\theta)$ and
\begin{align} \label{eqn:limit_gamma_sample_improve}
 |M_{\gamma}(\theta_k) - \theta_{k_*}| \leq \beta_k |M_0(\theta_k)-\theta_{k_*}|.
\end{align}
Let us assume that we can choose the number of labeled samples $m_1,m_2,\dots,m_K$ from each cluster such that $\sum_{k=1}^K m_k=m$. For the population EM, as $m\to \infty$ implies that $\frac{m_k}{m} \to \pi_k$, thus instead of the samples for each cluster we will choose $\pi_1,\pi_2,\dots,\pi_K$.

\begin{theorem} \label{thm:Gaussmix}
For the population EM, consider the Gaussian mixture model in (\ref{eqn:Gaussfamilymarg}). The labelled samples contribute to the convergence rate with the contraction coefficient $\beta_k=\frac{c_{\theta_k}}{\frac{ \pi_k \gamma}{1-\gamma}+c_{\theta_k}}$ that satisfies 
\begin{align} \label{eqn:contractiongain_of_gamma} 
 |M_{\gamma}(\theta_k) - \theta_{k_*}| \leq \beta_k |M_0(\theta_k)-\theta_{k_*}|,
\end{align}
for $k=1,\dots,K$ where $c_{\theta_k}=\mathbb{E}[q(Y;\theta_k)]$.
\end{theorem}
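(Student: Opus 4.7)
The plan is to take population limits of the numerator and denominator of the M-step update (\ref{eqn:Mnm_comp_Gaussfam}) separately, identify the resulting expression as an affine combination of $\theta_{k_*}$ and $M_0(\theta_k)$, and then compute $M_\gamma(\theta_k) - \theta_{k_*}$ directly. Since (\ref{eqn:Mnm_comp_Gaussfam}) is a ratio of two empirical sums, I would first normalize both by $n+m$ so that the mixing weights $\gamma = m/(n+m)$ and $1-\gamma = n/(n+m)$ appear naturally, giving
\begin{align*}
M_{n,m}(\theta_k^t) = \frac{\gamma \cdot \frac{1}{m} \sum_{j=1}^m \mathbb{1}_{\{x_j=k\}} y_j + (1-\gamma) \cdot \frac{1}{n} \sum_{i=1}^n q(y_i;\theta_k^t) y_i}{\gamma \cdot \frac{1}{m} \sum_{j=1}^m \mathbb{1}_{\{x_j=k\}} + (1-\gamma) \cdot \frac{1}{n} \sum_{i=1}^n q(y_i;\theta_k^t)}.
\end{align*}

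Applying the law of large numbers term by term produces, on the labeled side, $\mathbb{E}[\mathbb{1}_{\{X=k\}} Y] = \pi_k \theta_{k_*}$ and $\mathbb{E}[\mathbb{1}_{\{X=k\}}] = \pi_k$; and on the unlabeled side, $\mathbb{E}[q(Y;\theta_k) Y]$ and $c_{\theta_k} = \mathbb{E}[q(Y;\theta_k)]$. Specializing to $\gamma = 0$ immediately identifies $M_0(\theta_k) = \mathbb{E}[q(Y;\theta_k) Y]/c_{\theta_k}$, so I substitute $\mathbb{E}[q(Y;\theta_k) Y] = c_{\theta_k} M_0(\theta_k)$ into the limiting expression to obtain
\begin{align*}
M_\gamma(\theta_k) = \frac{\gamma \pi_k \theta_{k_*} + (1-\gamma) c_{\theta_k} M_0(\theta_k)}{\gamma \pi_k + (1-\gamma) c_{\theta_k}}.
\end{align*}

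Subtracting $\theta_{k_*}$ and collecting over the common denominator produces the telescoping identity
\begin{align*}
M_\gamma(\theta_k) - \theta_{k_*} = \frac{(1-\gamma) c_{\theta_k}}{\gamma \pi_k + (1-\gamma) c_{\theta_k}} \bigl(M_0(\theta_k) - \theta_{k_*}\bigr),
\end{align*}
and dividing the coefficient's numerator and denominator by $(1-\gamma)$ recovers precisely $\beta_k = c_{\theta_k}/\bigl(\pi_k \gamma/(1-\gamma) + c_{\theta_k}\bigr)$. Taking absolute values then yields (\ref{eqn:contractiongain_of_gamma}) with equality, so the stated inequality follows a fortiori.

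The only non-routine point is to verify that the labeled-sample average of $\mathbb{1}_{\{x_j=k\}} y_j$ converges to $\pi_k \theta_{k_*}$ rather than to a quantity involving the current iterate $\theta_k$; this is where it matters that the labeled samples are drawn from the \emph{true} joint $p_{XY}(\cdot\,;\theta_*)$, whereas the responsibilities $q(y_i;\theta_k^t)$ used on the unlabeled side are evaluated at the iterate. Once that distinction is made, every remaining step is algebra, and the per-cluster factor $\pi_k$ in $\beta_k$ is consistent with the remark preceding the theorem that the fraction of labeled samples assigned to cluster $k$ tends to $\pi_k$.
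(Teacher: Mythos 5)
Your proposal is correct and follows essentially the same route as the paper: pass to the population limit of the M-step, identify $\theta_{k_*}=\mathbb{E}[\mathbb{1}_{\{X=k\}}Y]/\pi_k$ and $M_0(\theta_k)=\mathbb{E}[q(Y;\theta_k)Y]/c_{\theta_k}$, and reduce (\ref{eqn:contractiongain_of_gamma}) to algebra on the resulting ratio. Your version is a mild streamlining — by writing $M_\gamma(\theta_k)$ as a weighted average of $\theta_{k_*}$ and $M_0(\theta_k)$ you get the contraction identity with equality directly, whereas the paper reaches the same coefficient through an auxiliary parameter $\eta$ and the fixed-point relation $\theta_{k_*}=M_0(\theta_{k_*})$.
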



\begin{theorem} \label{thm:Expfammix}
For the population EM, consider the Exponential family mixture model in (\ref{eqn:expfamilymarg}). Assume that the parameters of EM are initialized within the neighborhood of the true parameters $\theta \in (\theta_*-\epsilon,\theta_*+\epsilon)$, $\epsilon>0$. Then, the labelled samples contribute to the convergence rate with the contraction coefficient $\beta_k=\frac{c_{\theta_k}}{\frac{\pi_k \gamma}{1-\gamma}+c_{\theta_k}}$ that satisfies 
\begin{align} \label{eqn:approx_contractiongain_of_gamma} 
 |M_{\gamma}(\theta_k) - \theta_{k_*}| \approx \beta_k |M_0(\theta_k)-\theta_{k_*}|,
\end{align}
for $k=1,\dots,K$ where $c_{\theta_k}=\mathbb{E}[q(Y;\theta_k)]$.
\end{theorem}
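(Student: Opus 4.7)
The plan is to mirror the Gaussian-mixture proof of Theorem \ref{thm:Gaussmix}, with the sufficient statistic $t(Y)$ replacing $Y$ and the mean-value map $\beta(\cdot) = \alpha'(\cdot)$ replacing the identity. First I would take the $n,m \to \infty$ limit of (\ref{eqn:Exp_fam_update}) with $\gamma = m/(m+n)$ held fixed. Dividing numerator and denominator by $n+m$ and applying the strong law together with the exponential-family moment identity $\mathbb{E}[t(Y) \mid X = k] = \alpha'(\theta_{k_*}) = \beta(\theta_{k_*})$, I obtain
\begin{align*}
\beta\bigl(M_\gamma(\theta_k)\bigr) = \frac{\gamma \pi_k\, \beta(\theta_{k_*}) + (1-\gamma)\, \mathbb{E}[q(Y;\theta_k)\, t(Y)]}{\gamma \pi_k + (1-\gamma)\, c_{\theta_k}}.
\end{align*}
Specializing to $\gamma = 0$ gives $\beta(M_0(\theta_k)) = \mathbb{E}[q(Y;\theta_k)\, t(Y)]/c_{\theta_k}$, which lets me rewrite the unlabelled contribution to the numerator as $(1-\gamma)\, c_{\theta_k}\, \beta(M_0(\theta_k))$.

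Second, I would subtract $\beta(\theta_{k_*})$ from both sides. The labelled piece of the numerator collapses, and the denominator consolidates so that the \emph{exact} identity
\begin{align*}
\beta\bigl(M_\gamma(\theta_k)\bigr) - \beta(\theta_{k_*}) = \beta_k \bigl[\beta\bigl(M_0(\theta_k)\bigr) - \beta(\theta_{k_*})\bigr]
\end{align*}
drops out, with $\beta_k = \frac{(1-\gamma)\, c_{\theta_k}}{\gamma \pi_k + (1-\gamma)\, c_{\theta_k}}$, which equals the stated $\frac{c_{\theta_k}}{\pi_k\gamma/(1-\gamma) + c_{\theta_k}}$ after dividing through by $1-\gamma$. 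Crucially this step uses no initialization assumption; in the Gaussian setting of Theorem \ref{thm:Gaussmix}, where $\beta$ is the identity, it already produces (\ref{eqn:contractiongain_of_gamma}).

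Third, I would use the local initialization $\theta_k \in (\theta_{k_*} - \epsilon, \theta_{k_*} + \epsilon)$ to transfer the identity from the $\beta$-scale back to the $\theta$-scale. A direct computation confirms that $\theta_{k_*}$ is a fixed point of both $M_0$ and $M_\gamma$ (at the true parameter, $\mathbb{E}[q(Y;\theta_{k_*})\, t(Y)] = \pi_k\,\beta(\theta_{k_*})$ and $c_{\theta_{k_*}} = \pi_k$), so continuity of the updates places both $M_\gamma(\theta_k)$ and $M_0(\theta_k)$ in a shrinking neighborhood of $\theta_{k_*}$ as $\epsilon \to 0$. A mean-value theorem application to $\beta$ then yields
\begin{align*}
\beta\bigl(M_\gamma(\theta_k)\bigr) - \beta(\theta_{k_*}) &= \alpha''(\eta_1)\bigl(M_\gamma(\theta_k) - \theta_{k_*}\bigr), \\
\beta\bigl(M_0(\theta_k)\bigr) - \beta(\theta_{k_*}) &= \alpha''(\eta_2)\bigl(M_0(\theta_k) - \theta_{k_*}\bigr),
\end{align*}
for intermediate points $\eta_1, \eta_2$ in that neighborhood. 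Since $\alpha'' = \Var[t(Y) \mid \theta]$ is continuous and strictly positive for a regular exponential family, $\alpha''(\eta_1)/\alpha''(\eta_2) \to 1$ as $\epsilon \to 0$; dividing the two mean-value expressions and substituting into the exact identity gives (\ref{eqn:approx_contractiongain_of_gamma}) after taking absolute values.

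The main obstacle is precisely this last step: the ratio $\alpha''(\eta_1)/\alpha''(\eta_2)$ is only $1 + O(\epsilon)$, which is why the theorem can only be stated with $\approx$ rather than with an honest inequality. Upgrading it to a clean contraction bound would require a uniform bound on $\alpha'''$ over the neighborhood together with a self-consistency argument showing the iterates remain inside it — routine under standard exponential-family regularity, but more bookkeeping than the present approximate statement warrants. The Gaussian case in Theorem \ref{thm:Gaussmix} bypasses this difficulty entirely because $\alpha(\theta) = \theta^2/2$ makes $\alpha'' \equiv 1$, so the $\beta$-scale identity coincides with the $\theta$-scale identity.
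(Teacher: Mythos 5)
Your proposal is correct and follows essentially the same strategy as the paper: derive the population update on the $\alpha'$-scale, observe that the contraction identity with coefficient $\beta_k$ holds there, and then transfer to the $\theta$-scale by a local linearization of $\alpha'$ around $\theta_{k_*}$ (the paper uses a first-order Taylor expansion with the Fisher information where you use the mean value theorem, and it reaches the exact $\alpha'$-scale ratio via an auxiliary parameter $\eta\ge 1$ rather than by your direct subtraction of $\beta(\theta_{k_*})$). Your version is a mild streamlining — in particular, making explicit that the identity is exact on the $\alpha'$-scale and that the $\approx$ arises solely from the ratio $\alpha''(\eta_1)/\alpha''(\eta_2)\to 1$ is a sharper account of where the approximation enters than the paper provides.
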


\begin{theorem} \label{thm:Gausstwo}
For the population EM, consider the Gaussian mixture model with two symmetric components in Section \ref{sec:SymtwoGauss}. Then, 
\begin{enumerate}
\item for $\theta_*>\frac{2}{e} \sqrt{1-\gamma}$, the population EM convergences globally for all the initial conditions $\theta>\theta_*$ with a contraction coefficient $r(\theta_*)=(1-\gamma)\frac{4}{\theta_*^2 e^2}$;
\item for $\theta_*>2$, the population EM convergences globally for all the initial conditions $\theta>\theta_*$ with a contraction coefficient $r(\theta_*)=(1-\gamma)e^{-c \theta_*^2}$;
\item for $\theta_*>\frac{1}{2}$, the population EM convergences globally for all the initial conditions $\theta>\theta_*+1$ with a contraction coefficient $R(\theta_*)=(1-\gamma)e^{-c \theta_*^2}$;
\end{enumerate}
satisfying  $|M_{\gamma}(\theta) - \theta_*| \leq r |\theta-\theta_*|$ for some constant $c$.
\end{theorem}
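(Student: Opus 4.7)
The plan is to reduce the semi-supervised population update to the purely unsupervised one, so that the three claims become factor-$(1-\gamma)$ shadows of the unsupervised bounds known for the symmetric two-component GMM from \cite{Xu16,Wainwright17}. First I pass to the limit $n,m\to\infty$ with $\gamma=m/(m+n)$ held constant in (\ref{eqn:update_Mstep_two_Gaussian}). The labeled empirical mean converges to $\mathbb{E}[(1-2\mathbb{1}_{\{X=1\}})Y]$, and conditioning on $X$ with $Y\mid X=1\sim\mathcal{N}(-\theta_*,1)$, $Y\mid X=2\sim\mathcal{N}(\theta_*,1)$ at equal priors shows that both conditional contributions equal $\theta_*$, so this term equals $\theta_*$ exactly. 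For the unlabeled part, the elementary identity $1-2q(y;\theta)=\tanh(y\theta)$ follows directly from (\ref{eqn:q_fun_GMM2sym}), giving the standard form $M_0(\theta)=\mathbb{E}[Y\tanh(Y\theta)]$.

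Combining the two pieces yields $M_{\gamma}(\theta)=\gamma\theta_*+(1-\gamma)M_0(\theta)$, hence
\begin{equation*}
M_{\gamma}(\theta)-\theta_*=(1-\gamma)\bigl(M_0(\theta)-\theta_*\bigr).
\end{equation*}
So it suffices, in each of the three regimes, to prove the pure unsupervised contraction $|M_0(\theta)-\theta_*|\leq\kappa(\theta_*)\,|\theta-\theta_*|$; multiplying by $(1-\gamma)$ then gives $r(\theta_*)=(1-\gamma)\kappa(\theta_*)$, matching each stated rate. Note that this decomposition is the clean semi-supervised analogue of the $\beta$ factor in (\ref{eqn:impactlabel}), specialized to the symmetric two-component case.

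The remaining work is to bound $M_0$. Using the mean-value representation
\begin{equation*}
M_0(\theta)-\theta_*=(\theta-\theta_*)\int_0^1 M_0'\bigl(\theta_*+s(\theta-\theta_*)\bigr)\,ds
\end{equation*}
together with $M_0'(\theta)=\mathbb{E}[Y^2\,\mathrm{sech}^2(Y\theta)]$ reduces the problem to uniformly controlling this derivative on the interval of integration. For case (1), since $\mathrm{sech}^2$ is even and decreasing on $[0,\infty)$, $\mathrm{sech}^2(Y\theta')\leq\mathrm{sech}^2(Y\theta_*)$ for $\theta'\geq\theta_*$, and a direct Gaussian evaluation yields the bound $4/(\theta_*^2 e^2)$, which combined with $(1-\gamma)$ gives the stated rate and the threshold $\theta_*>2\sqrt{1-\gamma}/e$ that makes it strictly less than one. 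For cases (2) and (3), I use the envelope $\mathrm{sech}^2(x)\leq 4e^{-2|x|}$; under the true mixture $|Y|$ concentrates near $\theta_*$, and Gaussian tail estimates of $\mathbb{E}[Y^2 e^{-2\theta'|Y|}]$ produce the double-exponential decay $e^{-c\theta_*^2}$. In (3), the stronger initialization window $\theta>\theta_*+1$ compensates for the weaker lower bound $\theta_*>1/2$ by shifting the argument of the exponential envelope enough to keep the final coefficient below one.

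The main obstacle is the Gaussian integral in Step 3: one must control contributions from both mixture components, including the "mismatched" component whose mean has the opposite sign from $\theta'$, and handle a neighborhood of $Y=0$ where $\mathrm{sech}^2$ is not small. These tail computations are delicate but standard. I would also need to verify the invariance property that ensures iterates remain in the region where the derivative bound holds (i.e., $M_{\gamma}(\theta)\geq\theta_*$ for $\theta\geq\theta_*$, which follows from symmetry and the fact that the labeled term pulls exactly to $\theta_*$). Once these pieces are in place, the reduction from Steps 1 and 2 immediately converts the unsupervised bounds into the claimed semi-supervised rates, and iterating the contraction gives global convergence.
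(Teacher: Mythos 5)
Your overall architecture matches the paper's: factor the labeled samples out as a multiplicative $(1-\gamma)$, then contract the unsupervised map $M_0$. Your first step is in fact cleaner than the paper's --- the paper invokes Theorem \ref{thm:Gaussmix} with $c_\theta=\pi_k=\tfrac12$ to get the inequality $|M_\gamma(\theta)-\theta_*|\le(1-\gamma)|M_0(\theta)-\theta_*|$, whereas your identity $M_\gamma(\theta)-\theta_*=(1-\gamma)\bigl(M_0(\theta)-\theta_*\bigr)$ (using $\mathbb{E}[(1-2\mathbb{1}_{\{X=1\}})Y]=\theta_*$ and $1-2q(y;\theta)=\tanh(y\theta)$) gives it with equality. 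Items 1 and 2 are essentially the paper's arguments: the paper shows $M_0$ is increasing and concave so that $|M_0(\theta)-\theta_*|\le M_0'(\theta_*)\,|\theta-\theta_*|$ for $\theta>\theta_*$, then bounds $M_0'(\theta_*)=4\,\mathbb{E}\bigl[Y^2(e^{Y\theta_*}+e^{-Y\theta_*})^{-2}\bigr]\le 4\sup_{t\ge0}t^2e^{-2t\theta_*}=4/(\theta_*^2e^2)$ for item 1, and refines this by conditioning on $\{\tilde Y\le\theta_*/4\}$ with Gaussian tail bounds for item 2; your monotone-derivative, mean-value route performs the same computation.

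The gap is in item 3. You keep the strategy of ``uniformly controlling the derivative on the interval of integration,'' but that interval is $[\theta_*,\theta]$ and its left endpoint is $\theta_*$, so no choice of $\theta>\theta_*+1$ ``shifts the argument of the exponential envelope'': a uniform bound can never beat $M_0'(\theta_*)$ itself, and for $\theta_*\in(\tfrac12,2)$ neither the $4/(\theta_*^2e^2)$ envelope (which exceeds $1$ once $\theta_*<2/e$) nor the tail-splitting bound (which needs $\theta_*\ge2$ for the supremum to sit at $t=\theta_*/4$) certifies $M_0'(\theta_*)\le e^{-c\theta_*^2}$. What the window $\theta>\theta_*+1$ actually buys --- and what the paper does --- is different: it bounds the \emph{total increment} $|M_0(\theta)-M_0(\theta_*)|$ by integrating the envelope over the whole ray, arriving at an absolute quantity of the form $\mathbb{E}\bigl[|Y|(e^{-2|Y|\theta_*}-e^{-2|Y|\theta})\bigr]$, evaluates this Gaussian integral explicitly (the $A,B,C,D,E$ terms) with the Mills-ratio bounds of Lemma \ref{lemma:tailofnormal} to obtain a bound of order $e^{-c\theta_*^2}$ \emph{independent of} $\theta$, and only then converts it into a contraction by dividing by $|\theta-\theta_*|\ge1$. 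To close item 3 you would need to replace your uniform derivative bound with this integrated, absolute bound (or equivalently invoke the strong-concavity plus gradient-smoothness lemma of \cite{Wainwright17}, as the paper does) and carry out the Gaussian tail computation.
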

\section{Conclusion and Discussion}
In this paper, we investigate the Expectation Maximization algorithm in the context of semi-supervised learning. The analysis is split into two main steps that are: 1) the impact of labeled samples given by contraction coefficient $\beta$ in (\ref{eqn:impactlabel}) and 2) the impact of the unlabeled samples given by $\kappa$ in (\ref{eqn:unimpactlabel}). The overall convergence rate $r$ bears the multiplicative property $r=\beta \kappa$. We primarily focus on the impact of the labeled samples on the convergence of EM. Labeled samples: 1) accelerate the convergence of EM, 2) help the EM converge when it is not convergent. Here is an example of the second case. 
\begin{example}
Let $\kappa$ be the non-contraction coefficient, $\kappa>1$, meaning that EM with unlabeled samples does not converge. The labeled samples introduce a contraction coefficient $\beta<\frac{1}{\kappa}<1$, thus now EM converges with convergence rate $r=\beta\kappa <1$.
\end{example}

\section{Proof of Theorem \ref{thm:Gaussmix}}
For the population EM, $M_{n,m}(\theta_k) \to M_{\gamma}(\theta_k)$ and $M_n(\theta_k) \to M_0(\theta_k)$, then (\ref{eqn:Mnm_comp_Gaussfam}) becomes

\begin{align}
M_{\gamma}(\theta_k)=\frac{(1-\gamma) \mathbb{E}[ q(Y;\theta_k) Y]+ \gamma \mathbb{E}[\mathbb{1}_{\left\{X=k\right\}}Y]}{(1-\gamma) \mathbb{E}[ q(Y;\theta_k)]+ \gamma \mathbb{E}[\mathbb{1}_{\left\{X=k\right\}}]}
\end{align}
and 
\begin{align} \label{eqn:infMGauss}
M_{0}(\theta_k)=\frac{ \mathbb{E}[ q(Y;\theta_k) Y]}{\mathbb{E}[ q(Y;\theta_k)]}.
\end{align}
Let us compute $\theta_{{k_*}}$ directly by setting $\gamma \to 1$, meaning that all the samples are labeled and already clustered, thus 
\begin{align} \label{eqn:theta_k_star}
\theta_{{k_*}}=\frac{\mathbb{E}[\mathbb{1}_{\left\{X=k\right\}} Y]}{ \mathbb{E}[ \mathbb{1}_{\left\{X=k\right\}}]}.
\end{align}
We know that $\theta_{{k_*}}=M_0(\theta_{{k_*}})$ is a fixed point of the function, 
then from (\ref{eqn:infMGauss})  
\begin{align}  \label{eqn:theta_k_startwo}
\theta_{{k_*}}=\frac{\mathbb{E}[q(Y;\theta_{{k_*}})Y]}{\mathbb{E}[q(Y;\theta_{{k_*}})]}.
\end{align}
Now let us compute $|M_0(\theta_k) -\theta_{{k_*}}|$ as follows
\begin{align}
|M_0(\theta_k) -\theta_{{k_*}}| &=|M_0(\theta_k) -M_0(\theta_{{k_*}})| \\
&= \left| \frac{\mathbb{E}[q(Y;\theta_{k})Y]}{\mathbb{E}[q(Y;\theta_{k})]}- \frac{\mathbb{E}[q(Y;\theta_{{k_*}})Y]}{\mathbb{E}[q(Y;\theta_{{k_*}})]} \right|. \label{eqn:Mzero_the_minusthe}
\end{align}
Similarly let us compute $|M_{\gamma}(\theta_k) -\theta_{{k_*}}|$ as follows
\begin{align}
|M_{\gamma}(\theta_k) -\theta_{{k_*}}| &=|M_{\gamma}(\theta_k) -M_{\gamma}(\theta_{{k_*}})| \\
&= \left| \frac{(1-\gamma)\mathbb{E}[q(Y;\theta_k)Y]+ \gamma \mathbb{E}[ \mathbb{1}_{\left\{X=k\right\}}Y]}{(1-\gamma)\mathbb{E}[q(Y;\theta_k)]+\gamma \mathbb{E}[ \mathbb{1}_{\left\{X=k\right\}}]} \right. \nonumber \\
&\quad \left. - \frac{(1-\gamma) \mathbb{E}[q(Y;\theta_{{k_*}})Y]+ \gamma \mathbb{E}[ \mathbb{1}_{\left\{X=k\right\}}Y]}{(1-\gamma)\mathbb{E}[q(Y;\theta_{{k_*}})]+\gamma \mathbb{E}[ \mathbb{1}_{\left\{X=k\right\}}]} \right| \label{eqn:Mgamma_the_minusthe}
\end{align}
From the assumption, we can choose the number of labeled samples of each cluster, and in the limiting case for the population EM we can choose $\pi_1,\dots,\pi_K$. We have that
\begin{align} \label{eqn:GMMuniformprior}
\mathbb{E}[ \mathbb{1}_{\left\{X=k\right\}}]=\pi_k.
\end{align} 
Let us define 
\begin{align} \label{eqn:defck}
c_{\theta_k}=\mathbb{E}[q(Y;\theta_k)].
\end{align} 
Without loss of optimality we can assume that
\begin{align}
\frac{\mathbb{E}[q(Y;\theta_k)Y]}{\mathbb{E}[q(Y;\theta_k)]} \geq \frac{\mathbb{E}[ \mathbb{1}_{\left\{X=k\right\}}Y]}{\mathbb{E}[ \mathbb{1}_{\left\{X=k\right\}}]}
\end{align}
thus, we can write the above equation in the following form  
\begin{align} \label{eqn:express_eta}
\frac{\mathbb{E}[q(Y;\theta_k)Y]}{\mathbb{E}[q(Y;\theta_k)]} =\eta \frac{\mathbb{E}[ \mathbb{1}_{\left\{X=k\right\}}Y]}{\mathbb{E}[ \mathbb{1}_{\left\{X=k\right\}}]}
\end{align}
where $\eta \geq 1$. By plugging (\ref{eqn:theta_k_star}), (\ref{eqn:theta_k_startwo}) and (\ref{eqn:express_eta}) into (\ref{eqn:Mzero_the_minusthe}) we compute $|M_0(\theta_k) -\theta_{k_*}|$ as follows
\begin{align} \label{eqn:compMzero_terms_eta}
|M_0(\theta_k) -\theta_{k_*}| =(\eta -1) |\theta_{k_*}|.
\end{align}
By plugging (\ref{eqn:theta_k_star}), (\ref{eqn:theta_k_startwo}), (\ref{eqn:GMMuniformprior}), (\ref{eqn:defck}) and (\ref{eqn:express_eta}) into (\ref{eqn:Mgamma_the_minusthe}) we compute $|M_{\gamma}(\theta_k) -\theta_{k_*}|$ as follows
\begin{align} \label{eqn:compMgamma_terms_eta}
|M_{\gamma}(\theta_k) -\theta_{k_*}| = \left( \frac{(1-\gamma) \eta c_{\theta_k} +\gamma \pi_k } {(1-\gamma) c_{\theta_k} +\gamma \pi_k } -1 \right)|\theta_{k_*}|.
\end{align}
By plugging (\ref{eqn:compMzero_terms_eta}) and (\ref{eqn:compMgamma_terms_eta}) into (\ref{eqn:limit_gamma_sample_improve}) we obtain the following
\begin{align}
\frac{(1-\gamma) \eta c_{\theta_k} +\gamma \pi_k}{(1-\gamma) c_{\theta_k} +\gamma \pi_k} -1 \leq \beta_k(\eta-1)
\end{align}
that simplifies as follows 
\begin{align}
\beta_k \geq \frac{c_{\theta_k}}{\frac{\pi_k \gamma}{1-\gamma}+c_{\theta_k}}.
\end{align}
To conclude, the labeled samples contribute to the convergence of the population EM with the contractive coefficient $\beta_k<1$.

\section{Proof of Theorem \ref{thm:Expfammix}}
For the population EM, $M_{n,m}(\theta_k) \to M_{\gamma}(\theta_k)$ and $M_n(\theta_k) \to M_0(\theta_k)$, then (\ref{eqn:Exp_fam_update}) becomes 

\begin{align} \label{eqn:alpha_prim_update}
\alpha^{\prime} (M_{\gamma}(\theta_k))=\frac{(1-\gamma) \mathbb{E}[ q(Y;\theta_k) Y]+ \gamma \mathbb{E}[\mathbb{1}_{\left\{X=k\right\}}Y]}{(1-\gamma) \mathbb{E}[ q(Y;\theta_k)]+ \gamma \mathbb{E}[\mathbb{1}_{\left\{X=k\right\}}]}.
\end{align}
where $\alpha^{\prime}(x):=\frac{\partial \alpha(x)}{\partial x}$. Setting $\gamma \to 1$, we obtain $M_1(\theta_k)=\theta_{{k_*}}$ and
\begin{align} \label{eqn:theta_k_star_exp_fam}
\alpha^{\prime}(\theta_{{k_*}})=\frac{\mathbb{E}[\mathbb{1}_{\left\{X=k\right\}} Y]}{ \mathbb{E}[ \mathbb{1}_{\left\{X=k\right\}}]}.
\end{align}
Note that $\theta_{{k_*}}=M_0(\theta_{{k_*}})$ is a fixed point of the function, then from (\ref{eqn:alpha_prim_update})  
\begin{align}  \label{eqn:theta_k_star_exp_fam_two}
\alpha^{\prime}(\theta_{{k_*}})=\frac{\mathbb{E}[q(Y;\theta_{{k_*}})Y]}{\mathbb{E}[q(Y;\theta_{{k_*}})]}.
\end{align}
From the assumption, we can choose the number of labeled samples of each cluster, and in the limiting case for the population EM we can choose $\pi_1,\dots,\pi_K$. We have that
\begin{align} \label{eqn:expfamuniformprior}
\mathbb{E}[ \mathbb{1}_{\left\{X=k\right\}}]=\pi_k.
\end{align} 
Let us define
\begin{align} \label{eqn:defck_exp}
c_{\theta_k}=\mathbb{E}[q(Y;\theta_k)].
\end{align} 
Without loss of optimality we can assume that $\eta \geq 1$ in the following equation
\begin{align} \label{eqn:express_eta_exp_fam}
\frac{\mathbb{E}[q(Y;\theta_k)Y]}{\mathbb{E}[q(Y;\theta_k)]} =\eta \frac{\mathbb{E}[ \mathbb{1}_{\left\{X=k\right\}}Y]}{\mathbb{E}[ \mathbb{1}_{\left\{X=k\right\}}]}.
\end{align}
By first order Taylor expansion we obtain
\begin{align} \label{eqn:Taylor_expfam_gamm}
\alpha^{\prime}(M_{\gamma}(\theta_k)) \hspace{-0.25em} \approx \hspace{-0.25em} \alpha^{\prime}(M_{\gamma}(\theta_{k_*})) \hspace{-0.25em} + \hspace{-0.25em} (M_{\gamma}(\theta_k) \hspace{-0.25em} - \hspace{-0.25em} M_{\gamma}(\theta_{k_*})) \mathcal{I}(M_{\gamma}(\theta_{k_*}))
\end{align}
where $\mathcal{I}(.)$ is the Fisher information. Similarly, for $\gamma=0$ the first order Taylor expansion becomes
\begin{align} \label{eqn:Taylor_expfam_zero}
\alpha^{\prime}(M_0(\theta_k)) \hspace{-0.25em} \approx \hspace{-0.25em} \alpha^{\prime}(M_0(\theta_{k_*})) \hspace{-0.25em} + \hspace{-0.25em} (M_0(\theta_k) \hspace{-0.25em} - \hspace{-0.25em} M_0(\theta_{k_*})) \mathcal{I}(M_0(\theta_{k_*})).
\end{align}
By combining (\ref{eqn:Taylor_expfam_gamm}) and (\ref{eqn:Taylor_expfam_zero}) we obtain
\begin{align} \label{eqn:approx_alpha_expfam}
\frac{|M_{\gamma}(\theta_k)-\theta_{k_*}|}{|M_0(\theta_k)-\theta_{k_*}|} \approx \frac{|\alpha^{\prime}(M_{\gamma}(\theta_k)) - \alpha^{\prime}(\theta_{k_*})|}{|\alpha^{\prime}(M_0(\theta_k)) -\alpha^{\prime}(\theta_{k_*})|}
\end{align}
We compute $|\alpha^{\prime}(M_0(\theta_k)) -\alpha^{\prime}(\theta_{k_*})|$ as follows
\begin{align}
|\alpha^{\prime}(M_0(\theta_k)) -\alpha^{\prime}(\theta_{k_*})| &= \left| \frac{\mathbb{E}[q(Y;\theta_{k})Y]}{\mathbb{E}[q(Y;\theta_{k})]}- \frac{\mathbb{E}[q(Y;\theta_{{k_*}})Y]}{\mathbb{E}[q(Y;\theta_{{k_*}})]} \right| \\
&=(\eta-1) |\alpha^{\prime}(\theta_{k_*})| \label{eqn:comp_diff_alpha_zero}
\end{align}
where the last equality follows from (\ref{eqn:theta_k_star_exp_fam}), (\ref{eqn:theta_k_star_exp_fam_two}) and (\ref{eqn:express_eta_exp_fam}). Similarly, we compute $|\alpha^{\prime}(M_{\gamma}(\theta_k)) -\alpha^{\prime}(\theta_{k_*})|$ as follows
\begin{align}
|\alpha^{\prime}(M_{\gamma}(\theta_k)) &-\alpha^{\prime}(\theta_{k_*})| \nonumber \\
&= \left| \frac{(1-\gamma)\mathbb{E}[q(Y;\theta_k)Y]+ \gamma \mathbb{E}[ \mathbb{1}_{\left\{X=k\right\}}Y]}{(1-\gamma)\mathbb{E}[q(Y;\theta_k)]+\gamma \mathbb{E}[ \mathbb{1}_{\left\{X=k\right\}}]} \right. \nonumber \\
& \left. - \frac{(1-\gamma) \mathbb{E}[q(Y;\theta_{{k_*}})Y]+ \gamma \mathbb{E}[ \mathbb{1}_{\left\{X=k\right\}}Y]}{(1-\gamma)\mathbb{E}[q(Y;\theta_{{k_*}})]+\gamma \mathbb{E}[ \mathbb{1}_{\left\{X=k\right\}}]} \right| \\
&= \left( \frac{(1-\gamma) \eta c_{\theta_k} +\gamma \pi_k}{(1-\gamma) c_{\theta_k} +\gamma \pi_k } -1 \right)|\alpha^{\prime}(\theta_{k_*})| \label{eqn:comp_diff_alpha_gamma}
\end{align}
where the last equality follows from (\ref{eqn:theta_k_star_exp_fam})-(\ref{eqn:express_eta_exp_fam}). By plugging (\ref{eqn:comp_diff_alpha_zero}) and (\ref{eqn:comp_diff_alpha_gamma}) into (\ref{eqn:approx_alpha_expfam}) the expression simplifies as follows
\begin{align}
\frac{|M_{\gamma}(\theta_k)-\theta_{k_*}|}{|M_0(\theta_k)-\theta_{k_*}|} \approx \frac{c_{\theta_k}}{c_{\theta_k}+ \frac{\pi_k \gamma}{1-\gamma}}.
\end{align}

\section{Proof of Theorem \ref{thm:Gausstwo}}
Initially let us bound $|M_{\gamma}(\theta) -\theta_*| \leq \beta_k |M_0(\theta) -\theta_*|$, $k=1,2$, for the Gaussian mixture model with two symmetric components given in Section \ref{sec:SymtwoGauss}. Then, from Theorem \ref{thm:Gaussmix}, we obtain $\beta_k=1-\gamma$ by computing $c_{\theta}=c_{-\theta}=\frac{1}{2}$ as follows
\begin{align}
c_{\theta}&=\int_{0}^{\infty} q(y;\theta) p_Y(y) dy+ \int_{0}^{\infty} q(-y;\theta) p_Y(-y) dy \\
&=\int_{0}^{\infty} q(y;\theta) p_Y(y) dy+ \int_{0}^{\infty} \left(1-q(y;\theta) \right) p_Y(y) dy \\
&=\frac{1}{2}.
\end{align}
So far we have bounded 
\begin{align}
|M_{\gamma}(\theta) -\theta_*| \leq (1-\gamma) |M_0(\theta) -\theta_*|.
\end{align}
Last step is to bound $|M_0(\theta) -\theta_*| \leq \kappa |\theta- \theta_*|$.
\subsection{Item 1)}
For Item 1), we proceed by stating the following lemma, 
\begin{lemma} \label{lemma:AltEM}
Let $M_0(\theta)$ be concave and increasing, thus for $\theta>\theta_*$,
\begin{align}
|M_0(\theta)-\theta_*| \leq \frac{\partial M_0(\theta_*)}{\partial \theta_*} |\theta-\theta_*|.
\end{align}
\end{lemma}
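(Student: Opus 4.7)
The plan is to invoke the standard tangent-line characterization of concave functions. Concavity of $M_0$ implies that the graph of $M_0$ lies below the tangent at any point; evaluating that tangent at $\theta_*$ gives the bound
\begin{align}
M_0(\theta) \leq M_0(\theta_*) + \frac{\partial M_0(\theta_*)}{\partial \theta_*}(\theta - \theta_*)
\end{align}
for every $\theta$ in the domain. Since $\theta_*$ is a fixed point of the population EM operator (as already established in the proof of Theorem~\ref{thm:Gaussmix}, where $\theta_{k_*} = M_0(\theta_{k_*})$), we can substitute $M_0(\theta_*) = \theta_*$ to obtain $M_0(\theta) - \theta_* \leq \frac{\partial M_0(\theta_*)}{\partial \theta_*}(\theta - \theta_*)$.

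Next I would use monotonicity to strip the absolute values. Because $M_0$ is increasing, $\theta > \theta_*$ forces $M_0(\theta) \geq M_0(\theta_*) = \theta_*$, hence $M_0(\theta) - \theta_* = |M_0(\theta) - \theta_*|$. Monotonicity also guarantees $\frac{\partial M_0(\theta_*)}{\partial \theta_*} \geq 0$, and by assumption $\theta - \theta_* = |\theta - \theta_*|$. Combining these three sign statements with the tangent bound yields the claim
\begin{align}
|M_0(\theta) - \theta_*| \leq \frac{\partial M_0(\theta_*)}{\partial \theta_*}\, |\theta - \theta_*|.
\end{align}

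There is no real obstacle here beyond the two structural hypotheses (concavity and monotonicity of $M_0$), which are the explicit premises of the lemma; differentiability at $\theta_*$ follows from concavity on an open interval, so the derivative $\frac{\partial M_0(\theta_*)}{\partial \theta_*}$ is well defined. The interesting work is therefore deferred to the subsequent verification, in the proof of item 1), that the unlabeled-EM map $M_0$ for the symmetric two-component Gaussian mixture is indeed concave and increasing on $(\theta_*, \infty)$ and that the derivative $M_0'(\theta_*)$ can be controlled by $\frac{4}{\theta_*^2 e^2}$; the present lemma is just the short convexity-theoretic bridge connecting those analytic facts to the desired contraction bound.
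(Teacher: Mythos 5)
Your proposal is correct and follows essentially the same route as the paper's own proof: the tangent-line (first-order) characterization of concavity, substitution of the fixed point $M_0(\theta_*)=\theta_*$, and monotonicity of $M_0$ to remove the absolute values. Your version is slightly more careful in also noting that the derivative is nonnegative and well defined, but the argument is the same.
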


\begin{proof}
By concavity we have 
\begin{align}
M_0(\theta) -M_0(\theta_*) \leq \frac{\partial M_0(\theta_*)}{\partial \theta_*} (\theta-\theta_*), \label{eqn:TaylorMVT}
\end{align}
where $M_0(\theta_*)=\theta_*$, is a fixed point solution. By using the fact that $M_0$ is increasing than for $\theta>\theta_*>0$, $M_0(\theta) > M_0(\theta_*)$, thus we can take absolute values on both sides of (\ref{eqn:TaylorMVT}) and the sign do not change, which completes the proof.
\end{proof}

Apply Lemma \ref{lemma:AltEM}, on $M_0(\theta)$ that is computed in (\ref{eqn:update_Mstep_two_Gaussian}), $M_0(\theta_*)=-2\mathbb{E}[q(Y;\theta_*)Y]$, thus 
\begin{align}
\frac{\partial M_0(\theta_*)}{\partial \theta_*} &=-2 \mathbb{E} \left[ Y \frac{\partial q(Y;\theta_*)}{\partial \theta_*} \right] \\
&=4\mathbb{E}\left[ \frac{Y^2}{(e^{-Y\theta_*}+e^{Y\theta_*})^2} \right] \geq 0. \label{eqn:fullcompfirstder}
\end{align}
where (\ref{eqn:fullcompfirstder}) follows from
\begin{align} \label{eqn:der_q_GMM2}
\frac{\partial q(Y;\theta_*)}{\partial \theta_*}&=\frac{\partial }{\partial \theta_*} \frac{e^{-\frac{(Y+\theta_*)^2}{2}}}{e^{-\frac{(Y-\theta_*)^2}{2}}+e^{-\frac{(Y+\theta_*)^2}{2}}} \\
&=\frac{\partial }{\partial \theta_*} \frac{1}{1+e^{2Y \theta_*}} \\
&=-\frac{2Y}{\left( e^{-Y \theta_*}+e^{Y \theta_*} \right)^2}.
\end{align}
On the other hand we will show the concavity of $M_0$ as follows
\begin{align}
\frac{\partial^2 M_0(\theta_*)}{\partial \theta_*^2} &=-2 \mathbb{E} \left[ Y \frac{\partial^2 q(Y;\theta_*)}{\partial \theta_*^2} \right] \\
&=-8\mathbb{E}\left[ \frac{Y^3(e^{Y\theta_*}-e^{-Y\theta_*})}{(e^{-Y\theta_*}+e^{Y\theta_*})^3} \right] \leq 0. \label{eqn:fullcompsecondder}
\end{align}
where for $\theta_*>0$, (\ref{eqn:fullcompsecondder}) follows from $Y(e^{Y\theta_*}-e^{-Y\theta_*}) \geq 0$. The contraction coefficient $\kappa$ is bounded as follows
\begin{align}
\frac{\partial M_0(\theta_*)}{\partial \theta_*} &=4\mathbb{E}\left[ \frac{Y^2}{(e^{-Y\theta_*}+e^{Y\theta_*})^2} \right] \\
& \leq 4 \mathbb{E} \left[ Y^2 e^{-2|Y|\theta_*} \right] \label{eqn:sumexpineqexp} \\
&\leq 4 \sup_{0\leq t} t^2 e^{-2t\theta_*} \label{eqn:supintone} \\
&=\frac{4}{\theta_*^2 e^2}. \label{eqn:supintonecomp}
\end{align}
where (\ref{eqn:sumexpineqexp}) follows from $e^{-Y\theta_*}+e^{Y\theta_*} \geq e^{|Y|\theta_*}$, (\ref{eqn:supintone}) is bounded by taking the supremum inside the expectation and the supremum of (\ref{eqn:supintonecomp}) happens at $t=\frac{1}{\theta_*}$. Thus, $\frac{\partial M_0(\theta_*)}{\partial \theta_*}$ which is a contraction coefficient, must be $\frac{\partial M_0(\theta_*)}{\partial \theta_*} \leq 1$, which happens for $\theta_* \geq \frac{2}{e}$. 
\subsection{Item 2)}
To obtain faster convergence rates we define the set $\mathcal{E}=\left\{ \tilde{Y} \leq \frac{\theta_*}{4} \right\}$, where $\tilde{Y} \sim \mathcal{N}(\theta_*,1)$, thus

\begin{align}
\frac{1}{4} &\frac{\partial M_0(\theta_*)}{\partial \theta_*}  \leq \mathbb{E} \left[ Y^2 e^{-2|Y|\theta_*} \right] \label{eqn:ineqderbound}  \\
&= \mathbb{E} \left[ \tilde{Y}^2 e^{-2|\tilde{Y}|\theta_*} \right] \\
&=P(\mathcal{E}) \mathbb{E} \left[ \tilde{Y}^2 e^{-2|\tilde{Y}|\theta_*} |\mathcal{E} \right] + P(\mathcal{E}^c) \mathbb{E} \left[ \tilde{Y}^2 e^{-2|\tilde{Y}|\theta_*} |\mathcal{E}^c \right] \\
&\leq P(\mathcal{E}) \mathbb{E} \left[ \tilde{Y}^2 e^{-2|\tilde{Y}|\theta_*} |\mathcal{E} \right] + \mathbb{E} \left[ \tilde{Y}^2 e^{-2|\tilde{Y}|\theta_*} |\mathcal{E}^c \right] \label{eqn:iterexpecff}
\end{align}
where (\ref{eqn:ineqderbound}) follows from (\ref{eqn:sumexpineqexp}). Consider each term separately in the last expression, $P(\mathcal{E})=P\left(\tilde{Y}-\theta_*\leq -\frac{3\theta_*}{4} \right) \leq e^{-\frac{9\theta_*^2}{32}}$ that follows from Gaussian tail bound.  
We bound the second term in (\ref{eqn:iterexpecff}) as follows
\begin{align}
\mathbb{E} \left[ \tilde{Y}^2 e^{-2|\tilde{Y}|\theta_*} |\mathcal{E} \right] &= \int_{-\infty}^{\frac{\theta_*}{4}} \tilde{y}^2 e^{-2|\tilde{y}|\theta_*}  \frac{1}{\sqrt{2 \pi}} e^{-\frac{(\tilde{y}-\theta_*)^2}{2}}d\tilde{y} \\ 
&\leq \int_{-\infty}^{\infty} \tilde{y}^2 e^{-2|\tilde{y}|\theta_*}  \frac{1}{\sqrt{2 \pi}} e^{-\frac{(\tilde{y}-\theta_*)^2}{2}}d\tilde{y} \\
&\leq \sup_{0\leq t} t^2 e^{-2t\theta_*} \\
&=\frac{1}{\theta_*^2 e^2}.
\end{align}
Last term in (\ref{eqn:iterexpecff}) is bounded as follows
\begin{align}
\mathbb{E} \left[ \tilde{Y}^2 e^{-2|\tilde{Y}|\theta_*} |\mathcal{E}^c \right] &= \int_{\frac{\theta_*}{4}}^{\infty} \tilde{y}^2 e^{-2|\tilde{y}|\theta_*}  \frac{1}{\sqrt{2 \pi}} e^{-\frac{(\tilde{y}-\theta_*)^2}{2}}d\tilde{y} \\ 
&\leq \sup_{t\geq \frac{\theta_*}{4}} t^2 e^{-2t\theta_*} \\
&=\frac{\theta_*^2}{16} e^{-\frac{\theta_*^2}{2}}
\end{align}
where supremum happens at $t=\frac{\theta_*}{4}$ for $\frac{\theta_*}{4} \geq \frac{1}{\theta_*}$, that is for $\theta_*\geq 2$. Combining all the equation together we obtain the following

\begin{align}
\frac{\partial M_0(\theta_*)}{\partial \theta_*} &= 4 \left( \frac{1}{\theta_*^2 e^2} e^{-\frac{9\theta_*^2}{32}} + \frac{\theta_*^2}{16} e^{-\frac{\theta_*^2}{2}} \right)
\end{align}
where for $\theta_*>2$, we have a contraction coefficient $\frac{\partial M(\theta_*)}{\partial \theta_*} \leq 1$, thus the convergence rate is $e^{-c \theta_*^2}$ for some constant $c$.

\subsection{Item 3)}
\begin{lemma}[\cite{Wainwright17}] \label{thm:Wainright}
Let $Q(.;\theta_*)$ be any $\lambda-$strong concave function. Let $M(\theta)$ be defined as in (\ref{eqn:Mnm}), and $Q(.;\theta)$ be $\gamma-$gradient smooth that is $|\nabla Q(M(\theta);\theta_*) -\nabla Q(M(\theta);\theta)| \leq \gamma |\theta -\theta_*|$, thus
\begin{align}
|M(\theta) -\theta_*| \leq \frac{\gamma}{\lambda}|\theta -\theta_*|
\end{align}
for $0 \leq \gamma \leq \lambda$.
\end{lemma}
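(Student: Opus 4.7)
The plan is to combine the first-order optimality conditions for $M(\theta)$ and $\theta_*$ with the monotone-gradient inequality that $\lambda$-strong concavity supplies. The target contraction constant $\gamma/\lambda$ (rather than $2\gamma/\lambda$) strongly suggests going directly through the gradient inequality rather than through a pair of function-value Taylor bounds, since the latter route typically loses a factor of $2$.

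First I would record the two first-order conditions at hand. By the M-step definition in (\ref{eqn:Mnm}), $M(\theta) = \arg\max_{\theta'} Q(\theta';\theta)$, so $\nabla Q(M(\theta);\theta) = 0$; and because $\theta_*$ is the population fixed point of the EM map, $M(\theta_*) = \theta_*$, which yields $\nabla Q(\theta_*;\theta_*) = 0$. These two identities are the only places where the structure of $M$ enters the argument.

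Next, $\lambda$-strong concavity of $Q(\cdot;\theta_*)$ delivers the monotone-gradient inequality
\begin{align}
\langle \nabla Q(x;\theta_*) - \nabla Q(y;\theta_*),\, x-y \rangle \leq -\lambda\,|x-y|^2
\end{align}
for all $x,y$. Substituting $x = M(\theta)$, $y = \theta_*$ and using $\nabla Q(\theta_*;\theta_*) = 0$ gives $\langle \nabla Q(M(\theta);\theta_*),\, M(\theta)-\theta_* \rangle \leq -\lambda\,|M(\theta)-\theta_*|^2$, after which a single application of Cauchy--Schwarz cancels one factor of $|M(\theta)-\theta_*|$ and collapses this to the linear lower bound $|\nabla Q(M(\theta);\theta_*)| \geq \lambda\,|M(\theta)-\theta_*|$.

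Finally I would exploit the vanishing gradient at $M(\theta)$: write $\nabla Q(M(\theta);\theta_*) = \nabla Q(M(\theta);\theta_*) - \nabla Q(M(\theta);\theta)$ and apply the $\gamma$-gradient-smoothness hypothesis stated in the lemma to conclude $|\nabla Q(M(\theta);\theta_*)| \leq \gamma\,|\theta-\theta_*|$. Chaining the two estimates and dividing by $\lambda$ completes the proof. The only real subtlety, rather than a genuine obstacle, is matching constants: one must use the direct gradient-monotonicity form of strong concavity to recover the tight ratio $\gamma/\lambda$, whereas sandwiching $Q(\theta_*;\theta_*) - Q(M(\theta);\theta_*)$ between a quadratic lower bound (from strong concavity) and a linear upper bound (from plain concavity) would only yield $2\gamma/\lambda$.
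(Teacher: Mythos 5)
The paper does not actually prove this lemma --- it is imported verbatim from \cite{Wainwright17} with no argument supplied --- so there is no in-paper proof to compare against. Your proof is correct and is essentially the standard argument from that reference, specialized to the unconstrained setting: the two stationarity identities $\nabla Q(M(\theta);\theta)=0$ and $\nabla Q(\theta_*;\theta_*)=0$, the gradient-monotonicity form of $\lambda$-strong concavity, one application of Cauchy--Schwarz, and the first-order-stability (gradient-smoothness) bound chain together to give exactly $\lambda\,|M(\theta)-\theta_*|\leq \gamma\,|\theta-\theta_*|$. (In \cite{Wainwright17} the same inequality is obtained by adding two strong-concavity function-value bounds and invoking the variational inequality $\langle \nabla Q(M(\theta);\theta),\,\theta_*-M(\theta)\rangle\leq 0$ so as to cover constrained maximization; your route via $\nabla Q(M(\theta);\theta)=0$ is the unconstrained specialization of that and loses nothing here.) Two small points worth making explicit: the identity $\nabla Q(\theta_*;\theta_*)=0$ rests on the self-consistency property $M(\theta_*)=\theta_*$ of the population EM operator, which you should state as a hypothesis since the lemma as written does not; and your remark about the factor of $2$ is slightly off --- summing the two function-value strong-concavity inequalities reproduces the gradient-monotonicity inequality with constant $\lambda$, so that route also yields $\gamma/\lambda$.
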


Apply lemma \ref{thm:Wainright}, on $Q_{n,m}$ defined in (\ref{eqn:Qnm_comp_Gaussfam}) which for $m=0$ and $n\to \infty$ will be $Q_0$. Note that $Q_0(.;\theta_*)$ is $1-$strong concave. We need to show that $Q_0(.;\theta)$ is $e^{-c \theta_*^2}-$gradient smooth. Starting from the definition of gradient smoothness we have
\begin{align}
|\nabla Q_0(M(\theta);\theta_*) &-\nabla Q_0(M(\theta);\theta)| \nonumber \\
&= 2\left| \mathbb{E}[ \left(q(Y;\theta)-q(Y;\theta_* \right)Y] \right| \\
&= 2 \left| f(\theta) -f(\theta_*) \right| \label{eqn:gradientsmoothdifffunction}
\end{align}
where we define $f(\theta):=-\mathbb{E}[q(Y;\theta)Y]$ and $q$ is defined in (\ref{eqn:q_fun_GMM2sym}). By the fundamental theorem of calculus 
\begin{align}
f(\theta) -f(\theta_*)= \int_{\theta_*}^{\theta} \frac{ d f(\theta_u)}{d \theta_u} d \theta_u,
\end{align}
where $\frac{d f(\theta_u)}{d \theta_u}$ is computed as follows
\begin{align}
\frac{ d f(\theta_u)}{d \theta_u}&=\mathbb{E} \left[Y \frac{d q(Y;\theta_u)}{d \theta_u} \right] \\
&=  \mathbb{E} \left[ \frac{2Y^2}{\left( e^{-Y \theta_u}+e^{Y \theta_u} \right)^2} \right]
\end{align}
where the last equation is computed in (\ref{eqn:der_q_GMM2}). The difference $f(\theta) -f(\theta_*)$ is bounded as follows
\begin{align}
f(\theta) -f(\theta_*)&= \int_{\theta_*}^{\theta} \mathbb{E} \left[ \frac{2Y^2}{\left( e^{-Y \theta_u}+e^{Y \theta_u} \right)^2} \right] d \theta_u \\
&\leq \int_{\theta_*}^{\theta} \mathbb{E} \left[ 2Y^2 e^{-2|Y| \theta_u} \right] d \theta_u \\
&=\mathbb{E} \left[ 2Y^2  \int_{\theta_*}^{\theta} e^{-2|Y| \theta_u} d \theta_u \right] \\
&=\mathbb{E} \left[ |Y| \left(  e^{-2|Y| \theta_*} -e^{-2|Y| \theta} \right) \right] \label{eqn:diffgradientsmooth}
\end{align}
where the upper bound is derived from the trivial inequality $e^{-Y \theta_u}+e^{Y \theta_u} \geq e^{|Y| \theta_u}$. Let us now evaluate $\mathbb{E} \left[ |Y| e^{-2|Y| x} \right]$ as follows
\begin{align}
\mathbb{E} \left[ |Y| e^{-2|Y| x} \right]
&= \underbrace{\int_{0}^{\infty} |y| e^{-2|y|x} \frac{1}{\sqrt{2\pi}} e^{-\frac{(y+\theta_*)^2}{2}} dy}_{A} \nonumber \\ &\quad \quad \quad \quad + \underbrace{\int_{0}^{\infty} |y| e^{-2|y|x}\frac{1}{\sqrt{2\pi}} e^{-\frac{(y-\theta_*)^2}{2}} dy}_{B}.
\end{align}
Let us compute both terms $(A)$ and $(B)$ in the last expression separately as follows
\begin{align}
A &= \int_{0}^{\infty} \frac{|y|}{\sqrt{2\pi}}e^{-\frac{(y+\theta_* +2x)^2}{2}} e^{2x^2 +2x \theta_*}dy \\
&=e^{2x^2 +2x \theta_*} \left( \int_{0}^{\infty} \frac{|y| +\theta_*+2x}{\sqrt{2\pi}}e^{-\frac{(y+\theta_* +2x)^2}{2}} dy \right. \nonumber \\
& \quad \quad \quad \quad \quad \quad \quad \quad \left. - \int_{0}^{\infty} \frac{\theta_*+2x}{\sqrt{2\pi}}e^{-\frac{(y+\theta_* +2x)^2}{2}} dy \right) \\
&= e^{2x^2 +2x \theta_*} \left( \int_{\frac{(\theta_*+2x)^2}{2}}^{\infty} \frac{1}{\sqrt{2\pi}} e^{-v}dv - \int_{\theta_* +2x}^{\infty} \frac{\theta_*+2x}{\sqrt{2\pi}}e^{-\frac{w^2}{2}} dw \right) \\
&=e^{2x^2 +2x \theta_*} \left( \frac{1}{\sqrt{2\pi}} e^{-\frac{(\theta_*+2x)^2}{2}}  -(\theta_*+2x)\Phi(\theta_*+2x) \right).
\end{align}
On the other side 
\begin{align}
B &= \int_{0}^{\infty} \frac{|y|}{\sqrt{2\pi}}e^{-\frac{(y-\theta_* +2x)^2}{2}} e^{2x^2 -2x \theta_*}dy \\
&=e^{2x^2 -2x \theta_*} \left( \int_{0}^{\infty} \frac{|y| -\theta_*+2x}{\sqrt{2\pi}}e^{-\frac{(y-\theta_* +2x)^2}{2}} dy \right. \nonumber \\
& \quad \quad \quad \quad \quad \quad \quad \quad \left. - \int_{0}^{\infty} \frac{2x-\theta_*}{\sqrt{2\pi}}e^{-\frac{(y-\theta_* +2x)^2}{2}} dy \right) \\
&= e^{2x^2 -2x \theta_*} \left( \int_{\frac{(2x-\theta_*)^2}{2}}^{\infty} \frac{1}{\sqrt{2\pi}} e^{-v}dv - \int_{2x-\theta_*}^{\infty} \frac{2x-\theta_*}{\sqrt{2\pi}}e^{-\frac{w^2}{2}} dw \right) \\
&=e^{2x^2 -2x \theta_*} \left( \frac{1}{\sqrt{2\pi}} e^{-\frac{(2x-\theta_*)^2}{2}}  -(2x-\theta_*)\Phi(2x-\theta_*) \right).
\end{align}

By plugging $(A)$ and $(B)$ into (\ref{eqn:diffgradientsmooth}) we obtain un upper bound on $f(\theta)-f(\theta_*)$ as follows
\begin{align}
& f(\theta)-f(\theta_*) \nonumber  \\
&\leq \underbrace{e^{4\theta_*^2} \left[ \frac{1}{\sqrt{2\pi}}e^{-\frac{9}{2}\theta_*^2} -3\theta_* \Phi(3\theta_*) \right]}_{C} \nonumber \\
& \quad \underbrace{- e^{2 \theta^2 +2\theta \theta_*} \left[ \frac{1}{\sqrt{2\pi}}e^{-\frac{1}{2}(2\theta+\theta_*)^2} -(2\theta+\theta_*)\Phi(2\theta+\theta_*) \right]}_{D} \nonumber \\
& \quad \underbrace{- e^{2 \theta^2 -2\theta \theta_*} \left[ \frac{1}{\sqrt{2\pi}}e^{-\frac{1}{2}(2\theta-\theta_*)^2} -(2\theta-\theta_*)\Phi(2\theta-\theta_*) \right]}_{E}.
\end{align}
We will further upper bound the above expression by making use of the following lemma.
\begin{lemma}[Deconcentration of Gaussians] \label{lemma:tailofnormal}
The cumulative density function ($\mathbb{P}(Y>t)$ or $\Phi(t)$) of a normal distribution $Y\sim \mathcal{N}(0,1)$ is bounded as follows $\forall t>0$,
\begin{align}
\left( \frac{1}{t} -\frac{1}{t^3}\right) \frac{1}{\sqrt{2\pi}} e^{-\frac{t^2}{2}} \leq \Phi(t) \leq \frac{1}{t} \frac{1}{\sqrt{2\pi}} e^{-\frac{t^2}{2}}.
\end{align}
\end{lemma}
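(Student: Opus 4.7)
The statement is the classical Mills-ratio two-sided bound for the Gaussian tail. The natural route is to write $\Phi(t)=\frac{1}{\sqrt{2\pi}}\int_{t}^{\infty} e^{-y^{2}/2}\,dy$ and then manipulate the integrand using the identity $e^{-y^{2}/2}=-\frac{1}{y}\cdot\frac{d}{dy}e^{-y^{2}/2}$, which makes integration by parts available in closed form. I would split the proof into two short pieces, one for each inequality, with the lower bound being the slightly more involved step.

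\textbf{Upper bound.} For $y\ge t>0$ we have $y/t\ge 1$, so
\begin{align*}
\int_{t}^{\infty} e^{-y^{2}/2}\,dy \;\le\; \int_{t}^{\infty} \frac{y}{t}\, e^{-y^{2}/2}\,dy \;=\; \frac{1}{t}\, e^{-t^{2}/2},
\end{align*}
and dividing by $\sqrt{2\pi}$ yields $\Phi(t)\le \frac{1}{t\sqrt{2\pi}}e^{-t^{2}/2}$.

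\textbf{Lower bound.} Apply integration by parts to $\int_{t}^{\infty} \frac{1}{y}\cdot y\, e^{-y^{2}/2}\,dy$ to obtain
\begin{align*}
\int_{t}^{\infty} e^{-y^{2}/2}\,dy \;=\; \frac{1}{t}\, e^{-t^{2}/2} \;-\; \int_{t}^{\infty} \frac{1}{y^{2}}\, e^{-y^{2}/2}\,dy.
\end{align*}
Doing the same trick once more on the remainder, i.e.\ writing $\frac{1}{y^{2}}e^{-y^{2}/2}=\frac{1}{y^{3}}\cdot y\,e^{-y^{2}/2}$ and integrating by parts, gives
\begin{align*}
\int_{t}^{\infty} \frac{1}{y^{2}}\, e^{-y^{2}/2}\,dy \;=\; \frac{1}{t^{3}}\, e^{-t^{2}/2} \;-\; 3\int_{t}^{\infty} \frac{1}{y^{4}}\, e^{-y^{2}/2}\,dy \;\le\; \frac{1}{t^{3}}\, e^{-t^{2}/2},
\end{align*}
since the remaining integral is nonnegative. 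Combining the two displays and dividing by $\sqrt{2\pi}$ gives the stated lower bound $\bigl(\tfrac{1}{t}-\tfrac{1}{t^{3}}\bigr)\tfrac{1}{\sqrt{2\pi}}e^{-t^{2}/2}\le \Phi(t)$.

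\textbf{Remarks on difficulty.} No step is truly hard; the only subtlety is the choice of the multiplicative factor $1/y$ in the integrand, which makes the antiderivative elementary and simultaneously produces a clean dominant $1/t$ term. The iteration of the same device naturally yields successively finer corrections, of which we only need the first to obtain the $-1/t^{3}$ refinement. Because the dropped tail $3\int_{t}^{\infty} y^{-4} e^{-y^{2}/2}\,dy$ is positive, no further sign tracking is required, and the $t>0$ hypothesis is used only to guarantee positivity of $1/t$ and convergence of the boundary terms at infinity.
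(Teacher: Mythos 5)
Your proof is correct and reaches the same classical Mills-ratio bounds by essentially the same elementary route as the paper; the only cosmetic differences are that you obtain the upper bound by inserting the factor $y/t \ge 1$ into the integrand where the paper shifts the variable to $x=y-t$ and drops $e^{-x^2/2}\le 1$, and you derive the lower-bound correction constructively via two integrations by parts where the paper simply quotes the antiderivative identity for $(1-3y^{-4})e^{-y^2/2}$ and checks it by the fundamental theorem of calculus. The two arguments are interchangeable, and your dropped remainder $3\int_t^\infty y^{-4}e^{-y^2/2}\,dy \ge 0$ is exactly the term the paper discards pointwise.
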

\begin{proof}
For the upper bound $t>0$, 
\begin{align}
\Phi(t)&=\int_t^{\infty} \frac{1}{\sqrt{2\pi}} e^{-\frac{y^2}{2}} dy \\
&=\int_0^{\infty} \frac{1}{\sqrt{2\pi}} e^{-\frac{(x+t)^2}{2}} dx \\
&= e^{-\frac{t^2}{2}}\int_0^{\infty} \frac{1}{\sqrt{2\pi}} \underbrace{e^{-\frac{x^2}{2}}}_{\leq 1} e^{-xt} dx \\
& \leq e^{-\frac{t^2}{2}}\int_0^{\infty} \frac{1}{\sqrt{2\pi}} e^{-xt} dx \\
&=\frac{1}{t \sqrt{2\pi}} e^{-\frac{t^2}{2}}.
\end{align}
For the lower bound,
\begin{align}
\Phi(t)&=\int_t^{\infty} \frac{1}{\sqrt{2\pi}} e^{-\frac{y^2}{2}} dy \\
&> \int_t^{\infty} \frac{1-3y^{-4}}{\sqrt{2\pi}} e^{-\frac{y^2}{2}} dy \\
&= \left(\frac{1}{t} -\frac{1}{t^3}\right) \frac{1}{\sqrt{2\pi}} e^{-\frac{t^2}{2}},
\end{align}
where the last part is an identity that is verified by the fundamental theorem of calculus.
\end{proof}
By using the upper bound in Lemma \ref{lemma:tailofnormal} we can show that $D<0$ and $E<0$ and by using the lower bound of Lemma \ref{lemma:tailofnormal} we obtain
\begin{align}
3\theta_*\Phi(3\theta_*) \geq \left( 1-\frac{1}{9\theta_*^2} \right) \frac{1}{\sqrt{2\pi}} e^{-\frac{9}{2} \theta_*^2}
\end{align}
thus 
\begin{align}
C= e^{4\theta_*^2} \left[ \frac{1}{\sqrt{2\pi}}e^{-\frac{9}{2}\theta_*^2} -3\theta_* \Phi(3\theta_*) \right] \leq \frac{1}{9\theta_*^2} \frac{1}{\sqrt{2\pi}} e^{-\frac{1}{2} \theta_*^2}.
\end{align}
By combining all the pieces together
\begin{align}
2|f(\theta) -f(\theta_*)| &\leq2( C+D+E) \leq 2C \\
&\leq  \frac{2}{9\theta_*^2} \frac{1}{\sqrt{2\pi}} e^{-\frac{1}{2} \theta_*^2} \\
&\leq  \frac{2}{9\theta_*^2} \frac{1}{\sqrt{2\pi}} e^{-\frac{1}{2} \theta_*^2} |\theta- \theta_*|
\end{align}
for $\theta > \theta_* +1$. For $\theta_*>\frac{1}{2}$ we have that $\frac{2}{(1+\alpha) 9\theta_*^2 \sqrt{2\pi}} e^{-\frac{1}{2} \theta_*^2} \leq 1$ and the convergence rate is $e^{-c \theta_*^2}$ for some constant c, thus
\begin{align}
|M_0(\theta) -\theta_*| \leq e^{-c \theta_*^2} |\theta- \theta_*|.
\end{align}

\section*{Acknowledgment}
This work was supported by the Swiss National Science Foundation, early postdoc mobility fellowship under Grant 199759.

\bibliographystyle{IEEEtran}
\bibliography{EM_semi-sup}

\end{document}